\newtheorem{theorem}{Theorem}[section]
\pgfplotsset{compat=1.13}
\title{Hash Embeddings for Efficient Word Representations}
\author{
  Dan Svenstrup \\
  Department for Applied Mathematics and Computer Science\\
  Technical University of Denmark (DTU)\\
  2800 Lyngby, Denmark\\
  \texttt{dsve@dtu.dk} \\
  \And
  Jonas Meinertz Hansen \\
  FindZebra\\
  Copenhagen, Denmark\\
  \texttt{jonas@findzebra.com} \\
  \And
  Ole Winther \\
  Department for Applied Mathematics and Computer Science\\
  Technical University of Denmark (DTU)\\
  2800 Lyngby, Denmark\\
  \texttt{olwi@dtu.dk} \\
}
\begin{document}

\maketitle

\begin{abstract}
  We present hash embeddings, an efficient method for representing words in a continuous vector form. A hash embedding may be seen as an interpolation between a standard word embedding and a word embedding created using a random hash function (the hashing trick). In hash embeddings each token is represented by $k$ $d$-dimensional embeddings vectors and one $k$ dimensional weight vector. The final $d$ dimensional representation of the token is the product of the two. Rather than fitting the embedding vectors for each token these are selected by the hashing trick from a shared pool of $B$ embedding vectors. 
Our experiments show that hash embeddings can easily deal with huge vocabularies consisting of millions of tokens. When using a hash embedding there is no need to create a dictionary before training nor to perform any kind of vocabulary pruning after training. We show that models trained using hash embeddings exhibit at least the same level of performance as models trained using regular embeddings across a wide range of tasks. Furthermore, the number of parameters needed by such an embedding is only a fraction of what is required by a regular embedding. Since standard embeddings and embeddings constructed using the hashing trick are actually just special cases of a hash embedding, hash embeddings can be considered an extension and improvement over the existing regular embedding types.

\end{abstract}

\section{Introduction}

Contemporary neural networks rely on loss functions that are continuous in the model's parameters in order to be able to compute gradients for training.
For this reason, any data that we wish to feed through the network, even data that is of a discrete nature in its original form will be translated into a continuous form.
For textual input it often makes sense to represent each distinct word or phrase with a dense real-valued vector in $\mathbb{R}^n$. These word vectors are trained either jointly with the rest of the model, or pre-trained on a large corpus beforehand. 

For large datasets the size of the vocabulary can easily be in the order of hundreds of thousands, adding millions or even billions of parameters to the model.
This problem can be especially severe when $n$-grams are allowed as tokens in the vocabulary. For example, the pre-trained Word2Vec vectors from Google \citep{pretrainedW2V} has a vocabulary consisting of 3 million words and phrases. This means that even though the embedding size is moderately small (300 dimensions), the total number of parameters is close to one billion.



The embedding size problem caused by a large vocabulary can be solved in several ways. Each of the methods have some advantages and some drawbacks:
\begin{enumerate}
    \item \textbf{Ignore infrequent words}.
    In many cases, the majority of a text is made up of a small subset of the vocabulary, and most words will only appear very few times (Zipf's law \citep{manning1999foundations}).
    
    By ignoring anything but most frequent words, and sometimes stop words as well, it is possible to preserve most of the text while drastically reducing the number of embedding vectors and parameters.
    However, for any given task, there is a risk of removing too much or to little.
    Many frequent words (besides stop words) are unimportant and sometimes even stop words can be of value for a particular task (e.g. a typical stop word such as ``and'' when training a model on a corpus of texts about logic).
    Conversely, for some problems (e.g. specialized domains such as medical search) rare words might be very important.
        
    \item \textbf{Remove non-discriminative tokens after training}.
    For some models it is possible to perform efficient feature pruning based on e.g. entropy \citep{stolcke2000entropy} or by only retaining the $K$ tokens with highest norm \citep{Joulin2016fasttextzip}. This reduction in vocabulary size can lead to a decrease in performance, but in some cases it actually avoids some over-fitting and increases performance \citep{stolcke2000entropy}. For many models, however, such pruning is not possible (e.g. for on-line training algorithms).
    
    \item \textbf{Compress the embedding vectors}.
    Lossy compression techniques can be employed to reduce the amount of memory needed to store embedding vectors. One such method is quantization, where each vector is replaced by an approximation which is constructed as a sum of vectors from a previously determined set of centroids \citep{Joulin2016fasttextzip,Jegou2011,Gray1998}.
\end{enumerate}
For some problems, such as online learning, the need for creating a dictionary before training can be a nuisance.
This is often solved with \emph{feature hashing}, where a hash function is used to assign each token $w \in \mathcal{T}$ to one of a fixed set of ``buckets'' $\{1, 2, \dots B\}$, each of which has its own embedding vector.
Since the goal of hashing is to reduce the dimensionality of the token space $\mathcal{T}$, we normally have that $B \ll |\mathcal{T}|$. This results in many tokens ``colliding'' with each other because they are assigned to the same bucket.
When multiple tokens collide, they will get the same vector representation which prevents the model from distinguishing between the tokens. Even though some information is lost when tokens collide, the method often works surprisingly well in practice \citep{Weinberger2009}.

One obvious improvement to the feature hashing method described above would be to learn an optimal hash function where important tokens do not collide. However, since a hash function has a discrete codomain, it is not easy to optimize using e.g. gradient based methods used for training neural networks \citep{Kulis09}. 

The method proposed in this article is an extension of feature hashing where we use $k$ hash functions instead of a single hash function, and then use $k$ trainable parameters for each word in order to choose the ``best'' hash function for the tokens (or actually the best combination of hash functions). We call the resulting embedding \textit{hash embedding}. As we explain in \cref{sec:theory}, embeddings constructed by both feature hashing and standard embeddings can be considered special cases of hash embeddings.


A hash embedding is an efficient hybrid between a standard embedding and an embedding created using feature hashing, i.e. a hash embedding has all of the advantages of the methods described above, but none of the disadvantages:
\begin{itemize}
    \item When using hash embeddings there is no need for creating a dictionary beforehand and the method can handle a dynamically expanding vocabulary.
    \item A hash embedding has a mechanism capable of implicit vocabulary pruning.
    \item Hash embeddings are based on hashing but has a trainable mechanism that can handle problematic collisions.
    \item Hash embeddings perform something similar to product quantization. But instead of all of the tokens sharing a single small codebook, each token has access to a few elements in a very large codebook. 
\end{itemize} 
Using a hash embedding typically results in a reduction of parameters of several orders of magnitude. Since the bulk of the model parameters often resides in the embedding layer, this reduction of parameters opens up for e.g. a wider use of e.g. ensemble methods or large dimensionality of word vectors. 




\section{Related Work}
\label{sec:related}
\citet{Argerich16} proposed a type of embedding that is based on hashing and word co-occurrence and demonstrates that correlations between those embedding vectors correspond to the subjective judgement of word similarity by humans.
Ultimately, it is a clever reduction in the embedding sizes of word co-occurrence based embeddings.

\citet{Reisinger2010} and since then \citet{Huang2012} have used multiple different word embeddings (prototypes) for the same words for representing different possible meanings of the same words. Conversely, \citet{bai2009} have experimented with hashing and treating words that co-occur frequently as the same feature in order to reduce dimensionality.

\citet{Huang2013} have used bags of either bi-grams or tri-grams of letters of input words to create feature vectors that are somewhat robust to new words and minor spelling differences.

Another approach employed by \citet{Zhang15,Xiao2016crnn,verydeepnlp} is to use inputs that represent sub-word units such as syllables or individual characters rather than words.
This generally moves the task of finding meaningful representations of the text from the input embeddings into the model itself and increases the computational cost of running the models \citep{wordrnn2016}.
\citet{Johansen16} used a hierarchical encoding technique to do machine translation with character inputs while keeping computational costs low.


\section{Hash Embeddings}
\label{sec:theory}
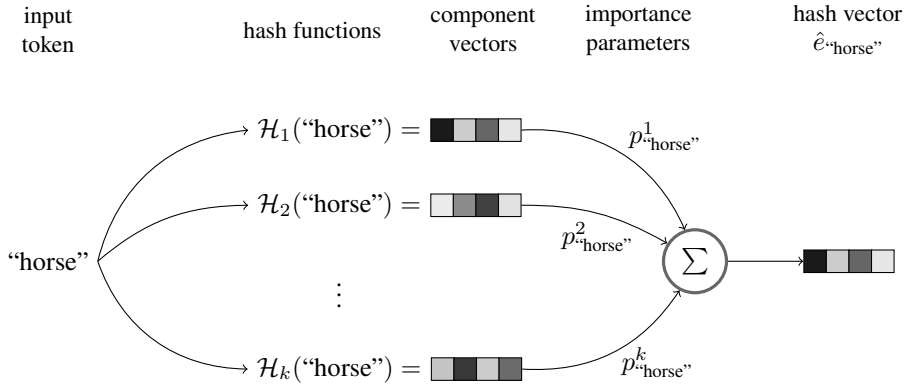
\begin{figure*}
\centering
\begin{tikzpicture}[
roundnode/.style={circle, draw=black!60, very thick, minimum size=7mm},
squarednode/.style={rectangle, draw=black, fill=red!5, minimum size=3mm},
]
\node (inputtext)                                 {``horse''};
\node (hash2)          [above right=0.2cm and 2cm of inputtext] {$\mathcal{H}_2(\text{``horse''}) =$};
\node (hash1)          [above=0.4cm of hash2]     {$\mathcal{H}_1(\text{``horse''}) =$};
\node (hashdots)       [below=0.4cm of hash2]     {\vdots};
\node (hash3)          [below=0.4cm of hashdots]  {$\mathcal{H}_k(\text{``horse''}) =$};
\node[roundnode] (sum) [right=7.5cm of inputtext] {$\sum$};

\node (hash_label)      [above left=0.8cm and -1.9cm of hash1] {\small hash functions};
\node[align=center] (component_label) [right=0.4cm of hash_label] {\small component \\ \small vectors};
\node[align=center] (importance_label) [right=0.4cm of component_label] {\small importance \\ \small parameters};
\node[align=center] (hash_vec_label) [right=1.1cm of importance_label] {\small hash vector \\ $\hat{e}_{\text{``horse''}}$};
\node[align=center] (component_label) [left=2cm of hash_label] {\small input \\ \small token};

\foreach \nlst [count=\i] in {%
    {90,20,60,10},%
    {8,45,75,12},%
    {23,78,20,58},%
}
    \foreach \c [count=\j from 0] in \nlst
        \node[squarednode,fill=black!\c] (vec\i\j) [right=\j*0.3 cm of hash\i] {};

\foreach \c [count=\i] in {90,20,60,10}
    \node[squarednode,fill=black!\c] (outvec\i) [right=0.7cm+\i*0.3 cm of sum] {};

\draw (inputtext.east) edge[bend left=35,->] (hash1.west);
\draw (inputtext.east) edge[bend left=20,->] (hash2.west);
\draw (inputtext.east) edge[bend right=35,->] (hash3.west);

\draw (vec13.east) edge[bend left=35,->] node [above right=-0.1cm and 0cm] {$p_\text{``horse''}^1$} (sum);
\draw (vec23.east) edge[bend left=15,->] node [below=0cm] {$p_\text{``horse''}^2$} (sum);
\draw (vec33.east) edge[bend right=30,->] node [below right=-.2cm and 0cm] {$p_\text{``horse''}^k$} (sum);

\draw (sum) edge[->] (outvec1);
\end{tikzpicture}
\caption{\label{fig:hashemb-illustration}Illustration of how to build the hash vector for the word ``horse''. The optional step of concatenating the vector of importance parameters to $\hat{e}_{\text{``horse''}}$ has been omitted. The size of component vectors in the illustration is $d=4$.}
\end{figure*}

In the following we will go through the step by step construction of a vector representation for a token $w \in \mathcal{T}$ using hash embeddings. The following steps are also illustrated in \cref{fig:hashemb-illustration}:

\begin{enumerate}
    \item Use $k$ different functions $\mathcal{H}_1, \dots, \mathcal{H}_k$ to choose $k$ \emph{component vectors} for the token $w$ from a predefined pool of $B$ shared component vectors
    \item Combine the chosen component vectors from step 1 as a weighted sum: $\hat{e}_w = \sum_{i=1}^k p_w^i \mathcal{H}_i(w)$. $p_w = (p_w^1, \dots, p_w^k)^\top \in \mathbb{R}^k$ are called the \emph{importance parameters} for $w$.
    \item \textit{Optional}: The vector of importance parameters for the token $p_w$ can be concatenated with $\hat{e}_w$ in order to construct the final hash vector $e_w$.
\end{enumerate}

The full translation of a token to a hash vector can be written in vector notation ($\oplus$ denotes the concatenation operator): 
    \begin{eqnarray*}
    c_w & = & (\mathcal{H}_1(w), \mathcal{H}_2(w), \dots, \mathcal{H}_k(w))^\top \\    
    p_w & = & (p_w^1, \dots, p_w^k)^\top \\
    \hat{e}_w & = & p_w^\top c_w \\
    e_w^\top & = & \hat{e}_w^\top \oplus p_w^\top \text{(optional)}
    \end{eqnarray*}

The token to component vector functions $\mathcal{H}_i$ are implemented by $\mathcal{H}_i(w) = E_{D_{2}(D_1(w))}$, where 
\begin{itemize}
\item $D_1: \mathcal{T} \rightarrow \{1, \ldots K\}$ is a token to id function.
\item $D_2: \{1, \ldots, K\} \rightarrow \{1, \ldots B\}$ is an id to bucket (hash) function.
\item  $E$ is a $B \times d$ matrix. 
\end{itemize}
If creating a dictionary beforehand is not a problem, we can use an enumeration (dictionary) of the tokens as $D_1$. If, on the other hand, it is inconvenient (or impossible) to use a dictionary because of the size of $\mathcal{T}$, we can simply use a hash function $D_1: \mathcal{T} \rightarrow \{1, \ldots K\}$.

The importance parameter vectors $p_w$ are represented as rows in a $K \times k$ matrix $P$, and the token to importance vector mapping is implemented by $w \rightarrow P_{\hat{D}(w)}$. $\hat{D}(w)$ can be either equal to $D_1$, or we can use a different hash function. In the rest of the article we will use $\hat{D} = D_1$, and leave the case where $\hat{D} \neq D_1$ to future work.

Based on the description above we see that the construction of hash embeddings requires the following:
\begin{enumerate}
    \item A trainable embedding matrix $E$ of size $B \times d$, where each of the $B$ rows is a component vector of length $d$.
    \item A trainable matrix $P$ of importance parameters of size $K \times k$ where each of the $K$ rows is a vector of $k$ scalar importance parameters.
    \item $k$ different hash functions $\mathcal{H}_1, \dots, \mathcal{H}_k$ that each uniformly assigns one of the $B$ component vectors to each token $w \in \mathcal{T}$.
\end{enumerate}

The total number of trainable parameters in a hash embedding is thus equal to $B \cdot d + K \cdot k$, which should be compared to a standard embedding where the number of trainable parameters is $K \cdot d$. The number of hash functions $k$ and buckets $B$ can typically be chosen quite small without degrading performance, and this is what can give a huge reduction in the number of parameters (we typically use $k=2$ and choose $K$ and $B$ s.t. $K > 10 \cdot B$).

From the description above we also see that the computational overhead of using hash embeddings instead of standard embeddings is just a matrix multiplication of a $1 \times k$ matrix (importance parameters) with a $k \times d$ matrix (component vectors). When using small values of $k$, the computational overhead is therefore negligible. In our experiments, hash embeddings were actually marginally faster to train than standard embedding types for large vocabulary problems\footnote{the small performance difference was observed when using Keras with a Tensorflow backend on a GeForce GTX TITAN X with 12 GB of memory and a Nvidia GeForce GTX 660 with 2GB memory. The performance penalty when using standard embeddings for large vocabulary problems can possibly be avoided by using a custom embedding layer, but we have not pursued this further.}. However, since the embedding layer is responsible for only a negligible fraction of the computational complexity of most models, using hash embeddings instead of regular embeddings should not make any difference for most models. Furthermore, when using hash embeddings it is not necessary to create a dictionary before training nor to perform vocabulary pruning after training. This can also reduce the total training time.

Note that in the special case where the number of hash functions is $k = 1$, and all importance parameters are fixed to $p_w^1 = 1$ for all tokens $w \in \mathcal{T}$, hash embeddings are equivalent to using the hashing trick. If furthermore the number of component vectors is set to $B = |\mathcal{T}|$ and the hash function $h_1(w)$ is the identity function, hash embeddings are equivalent to standard embeddings.

\section{Hashing theory}
\begin{theorem}
Let $h: \mathcal{T} \rightarrow \{0, \ldots, K\}$ be a hash function. Then the probability $p_{col}$ that $w_0 \in \mathcal{T}$ collides with one or more other tokens is given by 
\begin{equation}
p_{col} = 1 - (1-1/K)^{|\mathcal{T}|-1} \ .
\end{equation}
For large $K$ we have the approximation
\begin{equation} \label{approx:colprop}
p_{col} \approx 1-e^{-\frac{|\mathcal{T}|}{K}} \ .
\end{equation}
The expected number of tokens in collision $C_{tot}$ is given by 
\begin{equation} \label{exp:hashcol}
C_{tot} = |\mathcal{T}|p_{col} \ .
\end{equation}
\end{theorem}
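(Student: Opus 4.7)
The plan is to model the hash function as assigning each token uniformly and independently to one of the $K$ buckets, which is the standard assumption for an idealised hash function. Once this model is in place, the three claims reduce to elementary counting and a standard exponential approximation.

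First I would fix the token $w_0$ and consider any other token $w \in \mathcal{T} \setminus \{w_0\}$. Under the uniform model, the probability that $h(w) \neq h(w_0)$ equals $(K-1)/K = 1 - 1/K$. Because the $|\mathcal{T}| - 1$ other tokens are hashed independently of one another, the probability that none of them collides with $w_0$ is $(1 - 1/K)^{|\mathcal{T}|-1}$, and taking the complement yields the stated formula for $p_{col}$.

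Second I would derive the approximation. Using $\log(1 - 1/K) = -1/K + O(1/K^{2})$ for large $K$, one gets $(1 - 1/K)^{|\mathcal{T}|-1} = \exp\bigl((|\mathcal{T}|-1)\log(1 - 1/K)\bigr) \approx e^{-(|\mathcal{T}|-1)/K} \approx e^{-|\mathcal{T}|/K}$, where the last step uses $|\mathcal{T}| \gg 1$. Substituting into the exact expression gives \eqref{approx:colprop}.

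Finally, for $C_{tot}$ I would introduce indicator variables $X_w$ that equal $1$ if token $w$ is in collision with at least one other token, and $0$ otherwise. By symmetry, each $X_w$ has the same distribution as the indicator treated in the first step, so $\mathbb{E}[X_w] = p_{col}$ for every $w \in \mathcal{T}$. Then $C_{tot} = \sum_{w \in \mathcal{T}} X_w$, and linearity of expectation gives $\mathbb{E}[C_{tot}] = |\mathcal{T}| \, p_{col}$, matching \eqref{exp:hashcol}. There is no real obstacle in this proof; the only subtle point is stating the uniform-independent hashing assumption clearly, since the theorem is implicitly about an idealised hash function rather than a fixed deterministic one.
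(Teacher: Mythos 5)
Your proof is correct and follows exactly the standard ``birthday problem'' argument that the paper's proof consists of nothing more than a one-line reference to; you supply the details (uniform independent hashing, complement of the no-collision event, the $\log(1-1/K)$ expansion, and linearity of expectation) that the paper leaves implicit. The only point worth flagging is that the theorem's codomain $\{0,\ldots,K\}$ has $K+1$ elements while the formula uses $1/K$, so your assumption of $K$ equally likely buckets is the one the formula actually requires.
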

 
\begin{proof}
This is a simple variation of the ``birthday problem''.

\end{proof}
%
%
When using hashing for dimensionality reduction, collisions are unavoidable, which is the main disadvantage for feature hashing. This is counteracted by hash embeddings in two ways:

First of all, for choosing the component vectors for a token $w \in \mathcal{T}$, hash embeddings use $k$ independent uniform hash functions $h_i: \mathcal{T} \rightarrow \{1, \dots, B\}$ for $i = 1, \dots, k$.
The combination of multiple hash functions approximates a single hash function with much larger range $h: \mathcal{T} \rightarrow \{1, \dots, B^k\}$, which drastically reduces the risk of total collisions.
With a vocabulary of $|\mathcal{T}| = 100$M, $B = 1$M different component vectors and just $k=2$ instead of 1, the chance of a given token colliding with at least one other token in the vocabulary is reduced from approximately $1-\exp\left(-10^8/10^6\right) \approx 1$ to approximately $1-\exp\left(-10^8/10^{12}\right) \approx 0.0001$.
Using more hash functions will further reduce the number of collisions.


Second, only a small number of the tokens in the vocabulary are usually important for the task at hand.
The purpose of the importance parameters is to implicitly prune unimportant words by setting their importance parameters close to 0.
This would reduce the expected number of collisions to $|\mathcal{T}_\text{imp}| \cdot \exp\left(-\frac{|\mathcal{T}_\text{imp}|}{B}\right) $ where $\mathcal{T}_\text{imp} \subset \mathcal{T}$ is the set of important words for the given task. The weighting with the component vector will further be able to separate the colliding tokens in the $k$ dimensional subspace spanned by their $k$ $d$ dimensional embedding vectors. 

Note that hash embeddings consist of two layers of hashing. In the first layer each token is simply translated to an integer in $\{1, \ldots, K\}$ by a dictionary or a hash function $D_1$. If $D_1$ is a dictionary, there will of course not be any collisions in the first layer. If $D_1$ is a random hash function then the expected number of tokens in collision will be given by equation \ref{exp:hashcol}. These collisions cannot be avoided, and the expected number of collisions can only be decreased by increasing $K$. Increasing the vocabulary size by 1 introduces $d$ parameters in standard embeddings and only $k$ in hash embeddings. The typical $d$ ranges from 10 to 300, and $k$ is in the range 1-3. This means that even when the embedding size is kept small, the parameter savings can be huge. In \citep{Joulin2016fasttext} for example, the embedding size is chosen to be as small as 10. In order to go from a bi-gram model to a general $n$-gram model the number of buckets is increased from $K=10^7$ to $K=10^8$. This increase of buckets requires an additional 900 million parameters when using standard embeddings, but less than 200 million when using hash embeddings with the default of $k=2$ hash functions. I.e. even when the embedding size is kept extremely small, the parameter savings can be huge.





\section{Experiments}
\label{sec:experiments}

We benchmark hash embeddings with and without dictionaries on text classification tasks. 
\subsection{Data and preprocessing}
We evaluate hash embeddings on 7 different datasets in the form introduced by \citet{Zhang15} for various text classification tasks including topic classification, sentiment analysis, and news categorization. All of the datasets are balanced so the samples are distributed evenly among the classes. An overview of the datasets can be seen in \cref{Datasets}. Significant previous results are listed in \cref{tab:results}. We use the same experimental protocol as in \citep{Zhang15}.

We do not perform any preprocessing besides removing punctuation. The models are trained on snippets of text that are created by first converting each text to a sequence of $n$-grams, and from this list a training sample is created by randomly selecting between 4 and 100 consecutive $n$-grams as input. This may be seen as input drop-out and helps the model avoid overfitting. When testing we use the entire document as input. The snippet/document-level embedding is obtained by simply adding up the word-level embeddings.

\begin{table*}[ht]
\centering
\caption{Datasets used in the experiments, See \citep{Zhang15} for a complete description.}
\label{Datasets}
\begin{tabular}{lcccl}
               & \multicolumn{1}{l}{\#Train} & \multicolumn{1}{l}{\#Test} & \multicolumn{1}{l}{\#Classes} & Task         \\
AG’s news              & 120k                                & 7.6k                               & 4                             & English news categorization \\
DBPedia                & 450k                                & 70k                                & 14                            & Ontology classification     \\
Yelp Review Polarity   & 560k                                & 38k                                & 2                             & Sentiment analysis          \\
Yelp Review Full       & 560k                                & 50k                                & 5                             & Sentiment analysis          \\
Yahoo! Answers         & 650k                                & 60k                                & 10                            & Topic classification        \\
Amazon Review Full     & 3000k                               & 650k                               & 5                             & Sentiment analysis          \\
Amazon Review Polarity & 3600k                               & 400k                               & 2                             & Sentiment analysis                                  
\end{tabular}
\end{table*}

\subsection{Training}
All the models are trained by minimizing the cross entropy using the stochastic gradient descent-based \emph{Adam} method \citep{Kingma14} with a learning rate set to $\alpha = 0.001$. We use early stopping with a patience of 10, and use $5\%$ of the training data as validation data. All models were implemented using Keras with TensorFlow backend. The training was performed on a Nvidia GeForce GTX TITAN X with 12 GB of memory.


\subsection{Hash embeddings without a dictionary} \label{sec:nodict}
In this experiment we compare the use of a standard hashing trick embedding with a hash embedding. 
The hash embeddings use $K = 10$M different importance parameter vectors, $k = 2$ hash functions, and $B = 1$M component vectors of dimension $d = 20$. This adds up to 40M parameters for the hash embeddings.
For the standard hashing trick embeddings, we use an architecture almost identical to the one used in \citep{Joulin2016fasttext}. As in \citep{Joulin2016fasttext} we only consider bi-grams. We use one layer of hashing with 10M buckets and an embeddings size of 20. This requires 200M parameters. The document-level embedding input is passed through a single fully connected layer with softmax activation.

The performance of the model when using each of the two embedding types can be seen in the left side of \cref{results}. We see that even though hash embeddings require 5 times less parameters compared to standard embeddings, they perform at least as well as standard embeddings across all of the datasets, except for DBPedia where standard embeddings perform a tiny bit better.

\subsection{Hash embeddings using a dictionary}\label{sec:dict}
In this experiment we limit the vocabulary to the 1M most frequent $n$-grams for $n < 10$. Most of the tokens are uni-grams and bi-grams, but also many tokens of higher order are present in the vocabulary. We use embedding vectors of size $d = 200$. The hash embeddings use $k = 2$ hash functions and the bucket size $B$ is chosen by cross-validation among [500, 10K, 50K, 100K, 150K]. 
The maximum number of words for the standard embeddings is chosen by cross-validation among [10K, 25K, 50K, 300K, 500K, 1M]. We use a more complex architecture than in the experiment above, consisting of an embedding layer (standard or hash) followed by three dense layers with 1000 hidden units and ReLU activations, ending in a softmax layer. We use batch normalization \citep{Ioffe15} as regularization between all of the layers.

The parameter savings for this problem are not as great as in the experiment without a dictionary, but the hash embeddings still use 3 times less parameters on average compared to a standard embedding.

As can be seen in \cref{tab:results} the more complex models actually achieve a \textit{worse} result than the simple model described above. This could be caused by either an insufficient number of words in the vocabulary or by overfitting. Note however, that the two models have access to the same vocabulary, and the vocabulary can therefore only explain the general drop in performance, not the performance difference between the two types of embedding. This seems to suggest that using hash embeddings have a regularizing effect on performance. 

When using a dictionary in the first layer of hashing, each vector of importance parameters will correspond directly to a unique phrase. In \cref{tab:example-words} we see the phrases corresponding to the largest/smallest (absolute) importance values. As we would expect, large absolute values of the importance parameters correspond to important phrases. Also note that some of the $n$-grams contain information that e.g. the bi-gram model above would not be able to capture. For example, the bi-gram model would not be able to tell whether 4 or 5 stars had been given on behalf of the sentence \textit{``I gave it 4 stars instead of 5 stars''}, but the general $n$-gram model would.

\subsection{Ensemble of hash embeddings}
The number of buckets for a hash embedding can be chosen quite small without severely affecting performance. $B=500-10.000$ buckets is typically sufficient in order to obtain a performance almost at par with the best results. In the experiments using a dictionary only about 3M parameters are required in the layers on top of the embedding, while $k K + Bd= 2M + B \times 200$ are required in the embedding itself. 
This means that we can choose to train an ensemble of models with small bucket sizes instead of a large model, while at the same time use the same amount of parameters (and the same training time since models can be trained in parallel). Using an ensemble is particularly useful for hash embeddings: even though collisions are handled effectively by the word importance parameters, there is still a possibility that a few of the important words have to use suboptimal embedding vectors. When using several models in an ensemble this can more or less be avoided since different hash functions can be chosen for each hash embedding in the ensemble. 

We use an ensemble consisting of 10 models and combine the models using soft voting. Each model use $B=50.000$ and $d=200$. The architecture is the same as in the previous section except that models with one to three hidden layers are used instead of just ten models with three hidden layers. This was done in order to diversify the models. The total number of parameters in the ensemble is approximately 150M. This should be compared to both the standard embedding model in \cref{sec:nodict} and the standard embedding model in section \ref{sec:dict} (when using the full vocabulary), both of which require $\approx 200M$ parameters.

\begin{table*}[htbp]
\centering
\caption{
    \label{tab:results}
    Test accuracy (in $\%$) for the selected datasets}
\label{results}
\begin{tabular}{|l|cc|ccc|}
\hline
            & \multicolumn{2}{|c|}{\textbf{Without dictionary}} & \multicolumn{3}{|c|}{\textbf{With dictionary}}  \\
            
            & \multicolumn{2}{|c|}{Shallow network (\cref{sec:nodict})} & \multicolumn{3}{|c|}{Deep network (\cref{sec:dict})}  \\
            \hline
            & Hash emb.           & Std emb       & \multicolumn{1}{l}{Hash emb.} & \multicolumn{1}{l}{Std. emb.} & \multicolumn{1}{l|}{Ensemble} \\
\hline
AG          & \textbf{92.4}       & 92.0          & 91.5                          & 91.7            & 92.0                         \\
Amazon full & 60.0                & 58.3          & 59.4                         & 58.5                           & \textbf{60.5}                \\
Dbpedia     & 98.5                & 98.6          & 98.7                          & 98.6                          & \textbf{98.8}                \\
Yahoo       & 72.3                & 72.3          & 71.3                          & 65.8                          & \textbf{72.9}                \\
Yelp full   & \textbf{63.8}       & 62.6          & 62.6                          & 61.4                          & 62.9                         \\
Amazon pol  & 94.4                & 94.2          & 94.7                     & 93.6                           & \textbf{94.7}                \\
Yelp pol    & \textbf{95.9}       & 95.5          & 95.8                          & 95.0                          & 95.7                        \\
\hline
\end{tabular}
\end{table*}

\begin{table}[htbp]
\centering
\caption{State-of-the-art test accuracy in $\%$. The table is split between BOW embedding approaches (bottom) and more complex rnn/cnn approaches (top). The best result in each category for each dataset is bolded.}
\label{tab:prev-results}
\small
\begin{tabular}{llllllll}
                                               & AG                & DBP           & Yelp P            & Yelp F            & Yah A             & Amz F             & Amz P \\
char-CNN \citep{Zhang15}                       & 87.2              & 98.3          & 94.7              & 62.0              & 71.2              & 59.5              & 94.5  \\
char-CRNN \citep{Xiao2016crnn}                 & 91.4              & 98.6          & 94.5              & 61.8              & 71.7              & 59.2              & 94.1  \\
VDCNN \citep{verydeepnlp}                      & 91.3              & 98.7          & 95.7              & \textbf{64.7}      & 73.4             & 63.0               & 95.7  \\
wordCNN \citep{wordrnn2016}                    & \textbf{93.4}     & 99.2          & \textbf{97.1}      & 67.6               & \textbf{75.2}    & \textbf{63.8}     & \textbf{96.2} \\
Discr. LSTM  \citep{yogatama2017generative}    & 92.1              & 98.7           & 92.6              & 59.6              & 73.7                &                   &       \\
Virt. adv. net. \citep{miyato2016virtual}    &                      &\textbf{99.2}          &                   &                  &                    &                   &       \\
\hline 
fastText \citep{Joulin2016fasttext}            & \textbf{92.5}      & 98.6          & 95.7              & \textbf{63.9}     & 72.3              & 60.2              & 94.6  \\
BoW \citep{Zhang15}                            & 88.8              & 96.6          & 92.2              & 58.0              & 68.9              & 54.6              & 90.4  \\
$n$-grams \citep{Zhang15}                         & 92.0              & 98.6          & 95.6              & 56.3              & 68.5              & 54.3              & 92.0  \\
$n$-grams TFIDF \citep{Zhang15}                   & 92.4              & 98.7         & 95.4              & 54.8              & 68.5              & 52.4              & 91.5  \\
Hash embeddings (no dict.)                     & 92.4                & 98.5          & \textbf{95.9}     & 63.8              & 72.3              & 60.0              & 94.4  \\
Hash embeddings (dict.)                        & 91.5              & 98.7            & 95.8             & 62.5              & 71.9              & 59.4              & \textbf{94.7}  \\
Hash embeddings (dict., ensemble)              & 92.0              & \textbf{98.8}  & 95.7              & 62.9              & \textbf{72.9}     & \textbf{60.5}     & \textbf{94.7} 
\end{tabular}
\end{table}

\begin{table*}[htbp]
\centering
\caption{Words in the vocabulary with the highest/lowest importance parameters.}
\label{tab:example-words}
\small
\begin{tabular}{|l|c|c|}
\hline
                                & Yelp polarity                                                                                                                                                                                               & Amazon full                                                                                                                                                                                                                                 \\ \hline
\multicolumn{1}{|l|}{Important tokens} & \multicolumn{1}{l|}{\begin{tabular}[c]{@{}l@{}}What\_a\_joke, not\_a\_good\_experience, \\ Great\_experience, wanted\_to\_love, \\ and\_lacking, Awful, by\_far\_the\_worst,\end{tabular}} & \multicolumn{1}{l|}{\begin{tabular}[c]{@{}l@{}}gave\_it\_4, it\_two\_stars\_because, \\ 4\_stars\_instead\_of\_5, 4\_stars, \\ four\_stars, gave\_it\_two\_stars\end{tabular}} \\ \hline
Unimportant tokens                    & \begin{tabular}[c]{@{}l@{}}The\_service\_was, got\_a\_cinnamon, \\ 15\_you\_can, while\_touching, \\ and\_that\_table, style\_There\_is\end{tabular}                                       & \begin{tabular}[c]{@{}l@{}}that\_my\_wife\_and\_I, the\_state\_I, \\ power\_back\_on, years\_and\_though, \\ you\_want\_a\_real\_good\end{tabular}                                                                                          \\ \hline
\end{tabular}
\end{table*}




\section{Future Work}
\label{sec:future}
Hash embeddings are complementary to other state-of-the-art methods as it addresses the problem of large vocabularies. An attractive possibility is to use hash-embeddings to create a word-level embedding to be used in a context sensitive model such as wordCNN. 

As noted in \cref{sec:theory}, we have used the same token to id function $D_1$ for both the component vectors and the importance parameters. This means that words that hash to the same bucket in the first layer get both identical component vectors and importance parameters. This effectively means that those words become indistinguishable to the model. If we instead use a different token to id function $\hat{D}$ for the importance parameters, we severely reduce the chance of "total collisions". Our initial findings indicate that using a different hash function for the index of the importance parameters gives a small but consistent improvement compared to using the same hash function.

In this article we have represented word vector using a weighed sum of component vectors. However, other aggregation methods are possible. One such method is simply to concatenate the (weighed) component vectors. The resulting $kd$-dimensional vector is then equivalent to a weighed sum of orthogonal vectors in $\mathbb{R}^{kd}$.

Finally, it might be interesting to experiment with pre-training lean, high-quality hash vectors that could be distributed as an alternative to word2vec vectors, which require around 3.5 GB of space for almost a billion parameters.

\section{Conclusion}
We have described an extension and improvement to standard word embeddings and made an empirical comparisons between hash embeddings and standard embeddings across a wide range of classification tasks. 
Our experiments show that the performance of hash embeddings is always at par with using standard embeddings, and in most cases better. 

We have shown that hash embeddings can easily deal with huge vocabularies, and we have shown that hash embeddings can be used both with and without a dictionary. This is particularly useful for problems such as online learning where a dictionary cannot be constructed before training.

Our experiments also suggest that hash embeddings have an inherent regularizing effect on performance. When using a standard method of regularization (such as $L_1$ or $L_2$ regularization), we start with the full parameter space and regularize parameters by pushing some of them closer to 0. This is in contrast to regularization using hash embeddings where the number of parameters (number of buckets) determines the degree of regularization. Thus parameters not needed by the model will not have to be added in the first place. 

The hash embedding models used in this article achieve equal or better performance than previous bag-of-words models using standard embeddings. Furthermore, in 5 of 7 datasets, 
the performance of hash embeddings is in top 3 of state-of-the art.

%
%

\newpage
\small
\bibliographystyle{apalike}
\bibliography{references}

\begin{thebibliography}{}

\bibitem[Argerich et~al., 2016]{Argerich16}
Argerich, L., Zaffaroni, J.~T., and Cano, M.~J. (2016).
\newblock Hash2vec, feature hashing for word embeddings.
\newblock {\em CoRR}, abs/1608.08940.

\bibitem[Bai et~al., 2009]{bai2009}
Bai, B., Weston, J., Grangier, D., Collobert, R., Sadamasa, K., Qi, Y.,
  Chapelle, O., and Weinberger, K. (2009).
\newblock Supervised semantic indexing.
\newblock In {\em Proceedings of the 18th ACM conference on Information and
  knowledge management}, pages 187--196. ACM.

\bibitem[Conneau et~al., 2016]{verydeepnlp}
Conneau, A., Schwenk, H., Barrault, L., and LeCun, Y. (2016).
\newblock Very deep convolutional networks for natural language processing.
\newblock {\em CoRR}, abs/1606.01781.

\bibitem[Gray and Neuhoff, 1998]{Gray1998}
Gray, R.~M. and Neuhoff, D.~L. (1998).
\newblock Quantization.
\newblock {\em IEEE Trans. Inf. Theor.}, 44(6):2325--2383.

\bibitem[Huang et~al., 2012]{Huang2012}
Huang, E.~H., Socher, R., Manning, C.~D., and Ng, A.~Y. (2012).
\newblock Improving word representations via global context and multiple word
  prototypes.
\newblock In {\em Proceedings of the 50th Annual Meeting of the Association for
  Computational Linguistics: Long Papers - Volume 1}, ACL '12, pages 873--882,
  Stroudsburg, PA, USA. Association for Computational Linguistics.

\bibitem[Huang et~al., 2013]{Huang2013}
Huang, P.-S., He, X., Gao, J., Deng, L., Acero, A., and Heck, L. (2013).
\newblock Learning deep structured semantic models for web search using
  clickthrough data.
\newblock In {\em Proceedings of the 22nd ACM International Conference on
  Information and Knowledge Management (CIKM)}, pages 2333--2338.

\bibitem[Ioffe and Szegedy, 2015]{Ioffe15}
Ioffe, S. and Szegedy, C. (2015).
\newblock Batch normalization: Accelerating deep network training by reducing
  internal covariate shift.
\newblock {\em CoRR}, abs/1502.03167.

\bibitem[Jegou et~al., 2011]{Jegou2011}
Jegou, H., Douze, M., and Schmid, C. (2011).
\newblock Product quantization for nearest neighbor search.
\newblock {\em IEEE Trans. Pattern Anal. Mach. Intell.}, 33(1):117--128.

\bibitem[Johansen et~al., 2016]{Johansen16}
Johansen, A.~R., Hansen, J.~M., Obeid, E.~K., S{\o}nderby, C.~K., and Winther,
  O. (2016).
\newblock Neural machine translation with characters and hierarchical encoding.
\newblock {\em CoRR}, abs/1610.06550.

\bibitem[Johnson and Zhang, 2016]{wordrnn2016}
Johnson, R. and Zhang, T. (2016).
\newblock Convolutional neural networks for text categorization: Shallow
  word-level vs. deep character-level.
\newblock {\em CoRR}, abs/1609.00718.

\bibitem[Joulin et~al., 2016a]{Joulin2016fasttextzip}
Joulin, A., Grave, E., Bojanowski, P., Douze, M., J{\'{e}}gou, H., and Mikolov,
  T. (2016a).
\newblock Fasttext.zip: Compressing text classification models.
\newblock {\em CoRR}, abs/1612.03651.

\bibitem[Joulin et~al., 2016b]{Joulin2016fasttext}
Joulin, A., Grave, E., Bojanowski, P., and Mikolov, T. (2016b).
\newblock Bag of tricks for efficient text classification.
\newblock {\em CoRR}, abs/1607.01759.

\bibitem[Kingma and Ba, 2014]{Kingma14}
Kingma, D.~P. and Ba, J. (2014).
\newblock Adam: {A} method for stochastic optimization.
\newblock {\em CoRR}, abs/1412.6980.

\bibitem[Kulis and Darrell, 2009]{Kulis09}
Kulis, B. and Darrell, T. (2009).
\newblock Learning to hash with binary reconstructive embeddings.
\newblock In Bengio, Y., Schuurmans, D., Lafferty, J.~D., Williams, C. K.~I.,
  and Culotta, A., editors, {\em Advances in Neural Information Processing
  Systems 22}, pages 1042--1050. Curran Associates, Inc.

\bibitem[Manning et~al., 1999]{manning1999foundations}
Manning, C.~D., Sch{\"u}tze, H., et~al. (1999).
\newblock {\em Foundations of statistical natural language processing}, volume
  999.
\newblock MIT Press.

\bibitem[Miháltz, 2016]{pretrainedW2V}
Miháltz, M. (2016).
\newblock Google's trained word2vec model in python.
\newblock \url{https://github.com/mmihaltz/word2vec-GoogleNews-vectors}.
\newblock Accessed: 2017-02-08.

\bibitem[Miyato et~al., 2016]{miyato2016virtual}
Miyato, T., Dai, A.~M., and Goodfellow, I. (2016).
\newblock Virtual adversarial training for semi-supervised text classification.
\newblock {\em stat}, 1050:25.

\bibitem[Reisinger and Mooney, 2010]{Reisinger2010}
Reisinger, J. and Mooney, R.~J. (2010).
\newblock Multi-prototype vector-space models of word meaning.
\newblock In {\em Human Language Technologies: The 2010 Annual Conference of
  the North American Chapter of the Association for Computational Linguistics},
  HLT '10, pages 109--117, Stroudsburg, PA, USA. Association for Computational
  Linguistics.

\bibitem[Stolcke, 2000]{stolcke2000entropy}
Stolcke, A. (2000).
\newblock Entropy-based pruning of backoff language models.
\newblock {\em CoRR}, cs.CL/0006025.

\bibitem[Weinberger et~al., 2009]{Weinberger2009}
Weinberger, K.~Q., Dasgupta, A., Attenberg, J., Langford, J., and Smola, A.~J.
  (2009).
\newblock Feature hashing for large scale multitask learning.
\newblock {\em CoRR}, abs/0902.2206.

\bibitem[Xiao and Cho, 2016]{Xiao2016crnn}
Xiao, Y. and Cho, K. (2016).
\newblock Efficient character-level document classification by combining
  convolution and recurrent layers.
\newblock {\em CoRR}, abs/1602.00367.

\bibitem[Yogatama et~al., 2017]{yogatama2017generative}
Yogatama, D., Dyer, C., Ling, W., and Blunsom, P. (2017).
\newblock Generative and discriminative text classification with recurrent
  neural networks.
\newblock {\em arXiv preprint arXiv:1703.01898}.

\bibitem[Zhang et~al., 2015]{Zhang15}
Zhang, X., Zhao, J.~J., and LeCun, Y. (2015).
\newblock Character-level convolutional networks for text classification.
\newblock {\em CoRR}, abs/1509.01626.

\end{thebibliography}

\end{document}